\documentclass{article}
 \usepackage[preprint]{neurips_2026}

\usepackage{amsmath,amsfonts,bm}

\def\eqref#1{equation~\ref{#1}}
\def\1{\bm{1}}

\def\vh{{\bm{h}}}

\def\vz{{\bm{z}}}

\DeclareMathAlphabet{\mathsfit}{\encodingdefault}{\sfdefault}{m}{sl}
\SetMathAlphabet{\mathsfit}{bold}{\encodingdefault}{\sfdefault}{bx}{n}

\def\gA{{\mathcal{A}}}

\def\gE{{\mathcal{E}}}

\def\gG{{\mathcal{G}}}
\def\gH{{\mathcal{H}}}

\def\gQ{{\mathcal{Q}}}
\def\gR{{\mathcal{R}}}

\def\gV{{\mathcal{V}}}

\def\gX{{\mathcal{X}}}

\newcommand{\R}{\mathbb{R}}

\usepackage[utf8]{inputenc} % allow utf-8 input
\usepackage[T1]{fontenc}    % use 8-bit T1 fonts
\usepackage{hyperref}       % hyperlinks
\usepackage{url}            % simple URL typesetting
\usepackage{booktabs}       % professional-quality tables
\usepackage{amsfonts}       % blackboard math symbols
\usepackage{nicefrac}       % compact symbols for 1/2, etc.
\usepackage{microtype}      % microtypography
\usepackage{xcolor}         % colors
\usepackage{amsthm}
\usepackage{algorithm}
\usepackage{algpseudocode}
\usepackage[most]{tcolorbox} % boxed callouts

\usepackage{amssymb}
\usepackage{dsfont}

\usepackage{bbold}

\newcommand{\Pa}{\operatorname{Pa}}

\newcommand{\argsort}{\operatorname{argsort}}

\newtheorem{proposition}{Proposition}

\usepackage[table]{xcolor}
\usepackage{siunitx}

\usepackage{wrapfig}
\usepackage{multirow}
\usepackage{xspace}

\definecolor{tokhigh}{HTML}{6BAED6} % low token usage (better): lighter blue
\definecolor{toklow}{HTML}{F4A582}  % high token usage (worse): soft coral

\definecolor{lightred}{rgb}{1,0.8,0.8} 
\definecolor{lightgreen}{rgb}{0.8,1,0.8}
\definecolor{lightblue}{rgb}{0.88,0.96,1}
\definecolor{lightgray}{rgb}{0.9,0.9,0.9}

\newcommand{\method}{\textsc{Nexa}\xspace}
\newcommand{\tokcell}[2]{%
    \cellcolor{tokhigh!#2!toklow}\num[group-separator={,}]{#1}%
}

\title{Response-Conditioned Parallel-to-Sequential Orchestration for Multi-Agent Systems}

\author{%
  Nurbek Tastan\textsuperscript{1,2} \quad Alex Iacob\textsuperscript{2,3} \quad Lorenzo Sani\textsuperscript{2,3} \quad Meghdad Kurmanji\textsuperscript{2} \\ \textbf{Nicholas D. Lane\textsuperscript{2,3}} \quad \textbf{Samuel Horv\'{a}th\textsuperscript{1}} \quad \textbf{Karthik Nandakumar\textsuperscript{1,4}} \\ 
  \textsuperscript{1}MBZUAI, UAE \quad \textsuperscript{2}University of Cambridge, UK \\ \textsuperscript{3}Flower Labs, UK \quad \textsuperscript{4}Michigan State University, USA \\ 
}

\begin{document}

\maketitle

\begin{abstract}
    Multi-agent systems can solve complex tasks through collaboration between multiple Large Language Model agents. Existing collaboration frameworks typically operate in either a parallel or a sequential mode. In the parallel mode, agents respond independently to queries followed by aggregation of responses. In contrast, sequential systems allow agents to communicate via a directed topology and refine one another step by step. However, both modes are inadequate for achieving the desired objectives of minimizing communication and latency while simultaneously maximizing the accuracy of the final response. In this work, we introduce a hybrid paradigm called \method, a trainable response-conditioned policy that bridges the gap between the two modes. \method begins with a parallel execution stage, embeds the resulting responses into a shared semantic space, and then predicts a sparse directed acyclic communication graph. If the graph is empty, the system remains purely parallel; if it is non-empty, the system performs one sequential message propagation. The policy is a lightweight transformer model, and the method avoids the need for external LLM judges or reward models, as well as hand-crafted test-time topology search. We formalize this hybrid execution problem, show that the resulting graph is acyclic by construction, and that the framework strictly subsumes pure parallel execution, and present a training procedure based on policy-gradient optimization. Results demonstrate that the response-conditioned policy learned by \method under one setting can be reused when the number of agents, the task, or the underlying agent changes, thus emphasizing the generalizability of the learned communication policy. 
\end{abstract}

\vspace{-0.5em}
\section{Introduction}
\vspace{-0.5em}

Large language models (LLMs) have become increasingly capable at reasoning, coding, planning, and dialogue, yet a single model still suffers from stochastic failures, brittle long-horizon reasoning, and occasional hallucinations. Multi-agent systems aim to address these weaknesses by distributing problem solving across multiple agents whose outputs can complement, critique, or refine one another. The central question of such systems is \textbf{how that collaboration should be orchestrated.} 

Existing LLM-based multi-agent systems largely fall into two categories. In \textbf{parallel} systems, agents answer independently, and their outputs are combined by majority voting, self-consistency, or a learned aggregation rule \citep{wang2023selfconsistency, jiang2023llmblender}. In \textbf{sequential} systems, agents are arranged in a communication topology, often a chain, tree, or a more general graph, and information is propagated step by step \citep{zhuge24gptswarm, qian2025scaling}. Parallel systems are simple and scalable, but they are computationally expensive, token-intensive, and often redundant, requiring multiple rounds of parallel message propagation while still being unable to exploit targeted communication when one draft could help repair another. Sequential systems can support error correction and information flow, but they require a topology and therefore inherit the burden of deciding who should communicate with whom. Prior work has explored fixed topologies, policy-gradient optimization over edges, graph generators conditioned on tasks or roles, and judge-based routing, each adding substantial token, compute, optimization, coordination overhead, or reducing transferability across settings~\citep{qian2025scaling, zhuge24gptswarm, zhang2025gdesigner}.

These two paradigms are often treated as separate design choices. A system is either built as a parallel ensemble or as a sequential graph-based collaboration mechanism. Yet, this distinction is too rigid. In many realistic settings, the right approach is not to commit in advance to a single paradigm, but to start in parallel and then decide, based on the agent's actual outputs, whether sequential propagation is necessary. If the initial responses already contain strong agreement and sufficient information, additional communication may be unnecessary. If they disagree in informative ways, or if useful signals are scattered across agents, then structured propagation may help. This suggests that the real problem is not ``parallel or sequential'' in the abstract, but rather: 
\begin{tcolorbox}[
  colback=blue!10,
  colframe=black,
  boxrule=1pt,
  arc=4pt,
  left=3mm,right=3mm,top=1mm,bottom=1mm,
  halign=center
]
Given the current pool of agent responses, should the system remain in the parallel regime, or should it instantiate a communication graph and perform sequential refinement?
\end{tcolorbox}
To answer this question, we introduce \textbf{\method} (from ``nexus'', a connection or link), a trainable policy for \emph{communication graph prediction} in multi-agent LLM systems. \method begins with a parallel draft stage in which all agents answer independently. The resulting response pool is embedded into a shared semantic space, producing a compact representation of the current response state of the team. A lightweight transformer-based policy then predicts a sparse directed acyclic graph (DAG). If the graph is empty, the system remains in the parallel regime and returns the parallel aggregate. If the graph is non-empty, the system executes one sequential consolidation pass in which selected agents update their responses using information from upstream~nodes. 

This formulation deliberately treats parallel and sequential execution not as mutually exclusive system designs, but as two outcomes of the same learned policy. In this sense, the central contribution of \method is not merely graph prediction. It is a mechanism for \emph{bridging the gap between parallel execution and sequential execution} by using parallel drafts to decide whether structured propagation is needed and, if so, how it should proceed. 

A second principle of the method is simplicity. We do \emph{not} learn the topological order. Instead, we induce the order from agent contributions, retaining the most stable organizing principle of response-conditioned communication. The policy learns only the communication edges. We score the candidate communication edges using the affinity matrix formed from transformer-contextualized response representations. This makes the policy lightweight and keeps the graph decoder tightly coupled to the semantic interactions encoded by the backbone. 

\method is also designed to be agnostic to superficial configuration details. The policy consumes semantic representations of agent outputs rather than role labels, agent identities, or model-family indicators. As a consequence, the planner is structurally insensitive to which agent is called ``Programmer'' or ``Assistant''; what matters is what the agents actually say. This does \emph{not} by itself guarantee transfer across all tasks or backbones, and we explicitly treat that as an empirical question. But it does mean that the policy class is not intrinsically tied to a fixed role inventory or a single team structure. 

The paper makes four contributions. First, it formalizes a hybrid decision problem in which a learned communication graph determines whether a multi-agent system remains parallel or enters a sequential propagation regime. Second, it proposes a contribution-ordered, attention-based graph policy that predicts only the communication edges, keeping the controller simple and acyclic by construction. Third, it integrates the key theoretical properties of the method directly into the formulation: DAG validity, hybrid subsumption, and permutation-based identity agnosticism. Fourth, it empirically evaluates \method across reasoning and programming tasks, showing improved accuracy-cost tradeoffs, sparse communication behavior, and transfer across agent counts, tasks, model scales, and generations.

\vspace{-1em}
\section{Problem Formulation and Preliminaries}
\vspace{-.75em}

Let $\mathcal{A} \in \{\mathcal{A}_1, \ldots, \mathcal{A}_N\}$ be a set of $N$ agents, and let $\mathcal{Q}$ be a user query. Each agent may differ in prompt, role, or backbone model, but the communication policy introduced in this work does not rely on these identities explicitly. Instead, it operates on the semantic content of the agents' responses. 

Given the query, each agent independently produces an initial response 
\begin{equation}
    \gR_n^{(0)} = \gA_n(\gQ), \qquad n \in \{1, 2, \ldots, N\}. 
\end{equation} 
The first phase is fully parallel and produces a draft response set $\gR^{(0)} = \{\gR_1^{(0)}, \ldots, \gR_N^{(0)}\}$. 

The purpose of this is twofold. First, it provides diverse candidate solutions to the query. Second, and more importantly for our setting, it exposes the \textbf{current response state} of the multi-agent system. Since LLM outputs are inherently stochastic, this realized state is more informative for downstream coordination than static task labels or role descriptions. This response-conditioned perspective is central to the present work and follows the same foundational motivation that underlies SelfOrg~\citep{tastan2026stochastic}. 

To reason about relations among agent outputs, we map each response into a shared semantic embedding space using a fixed lightweight encoder $f$ (all-MiniLM-L6-v2~\citep{reimers2019sentencebert}): 
% \begin{equation}
    $r_n = f(\gR_n^{(0)}) \in \R^d.$ 
% \end{equation} 

Following SelfOrg~\citep{tastan2026stochastic}, we define the average response embedding 
% \begin{equation}
    $r_{\text{avg}} = \frac{1}{N} \sum_{n=1}^N r_n$
% \end{equation}
and contribution scores 
% \begin{equation}
    $\psi_n = \cos(r_n, r_{\text{avg}}).$
% \end{equation}
%
SelfOrg motivates $\psi_n$ as a linear-time approximation to a Shapley-style contribution value \citep{shapley1953value} and shows that, under suitable separation conditions, ranking by $\psi_n$ preserves the normalized Shapley ordering. This is precisely why \method uses contribution to define the topological ordering of the edges. 

The orchestration problem is to predict a directed communication graph 
% \begin{equation}
    $\gG = (\gV, \gE, \pi),$ 
% \end{equation}
where $V=\{1, \ldots, N\}, E \subseteq V \times V,$ and $\pi$ is an order over the nodes. If the graph is empty ($\gE = \varnothing$), the system stays in the parallel regime and outputs an aggregate of the initial drafts. If it is non-empty ($\gE \neq \varnothing$), the graph induces a sequential propagation step. 

For each node $n$, define its parent set 
% \begin{equation}
    $\Pa(n) = \{m: (m \to n) \in \gE\}.$ 
% \end{equation}
%
Then the updated response is 
\begin{equation}
    \gR_n^{(1)} = 
    \begin{cases}
        \gA_n (\gQ, \{\gR_m^{(\star)}: m \in \Pa(n)\}), & \Pa(n) \neq \varnothing, \\ 
        \gR_n^{(0)}, & \Pa(n) = \varnothing, 
    \end{cases}
\end{equation}
where $\gR_m^{(\star)}$ denotes the most recent available parent response under the topological execution order. 

The final answer is selected from the resulting response pool using a judge-free aggregation rule. Let $\{\vz_n\}$ be the final response embeddings and $\{w_n\}$ their contribution weights. We compute 
\begin{equation}
    z_{\text{centroid}} = \frac{\sum_{n=1}^N w_n \vz_n}{\sum_{n=1}^N w_n}, \qquad n^{\star} = \arg\max_n \cos(\vz_n, \vz_{\text{centroid}}). 
\end{equation}
and return the corresponding response. 

The learning objective is to maximize final task correctness. Given ground truth $y$ and final prediction $\hat{y}_{\gG}$ under graph $\gG$, the reward is 
\begin{equation}
    R(\gG) = \mathds{1}\left[ \operatorname{Eval}(\hat{y}_{\gG}, y) = 1 \right]. 
\end{equation}
The policy, therefore, learns to predict a communication graph that determines whether the initial parallel responses should remain as they are or be further refined through structured propagation. 

\vspace{-1em}
\section{Methodology} 
\vspace{-0.5em}
\label{t-selforg: methodology}

\subsection{System Overview} 

\method consists of five stages. First, all agents produce draft responses in parallel. Second, those responses are embedded into a shared semantic space. Third, a response-conditioned transformer policy predicts a sparse communication graph. Fourth, if the graph is non-empty, the corresponding destination nodes are updated sequentially. Fifth, the final answer is selected from the resulting response pool by weighted-centroid-based aggregation. 

This design has a central conceptual consequence: parallel execution is not discarded when sequential communication is introduced. Instead, the parallel draft stage becomes the \emph{source of evidence} that determines whether the system should remain in the parallel regime or transition into a sequential propagation regime.

\subsection{Contribution-Defined Order and DAG Validity}

We set the topological order as 
% \begin{equation}
    $\pi = \argsort(\psi_1,\dots,\psi_N; \psi_k \geq \psi_{k+1}, \forall k \in [N]).$ 
% \end{equation}

In other words, higher-contribution agents are always placed earlier in the communication order. The feasible edge set is therefore restricted to 
\begin{equation}
    \gE_{\pi} = \{(m,n) : \pi^{-1}(m) < \pi^{-1}(n)\}, 
\end{equation}
so that communication is only allowed to move forward under the contribution order. 

\begin{proposition}[Acyclicity by construction]
For any edge set $\gE \subseteq \gE_{\pi}$, the graph $\gG=(\gV,\gE,\pi)$ is a directed acyclic graph. 
\end{proposition}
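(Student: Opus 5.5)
The plan is to argue by contradiction, using the position map $\pi^{-1}$ as a potential function that strictly increases along every edge. First I will make precise that $\pi$, as produced by $\argsort$, is a bijection from $\{1,\dots,N\}$ onto $\gV$. Ties among the $\psi_n$ are broken by some fixed rule, for example by agent index, so that $\pi^{-1}(n)\in\{1,\dots,N\}$ is a well-defined, injective position for each node. This tie-breaking is the only point requiring care: if $\pi$ were merely a weak order, the set $\gE_\pi$ could not contain an edge between two tied nodes anyway, because the defining inequality $\pi^{-1}(m)<\pi^{-1}(n)$ is strict. So the argument goes through in either reading.

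Next I will record the key monotonicity fact. By definition of $\gE_\pi$, every edge $(m\to n)\in\gE\subseteq\gE_\pi$ satisfies $\pi^{-1}(m)<\pi^{-1}(n)$. Hence along any directed walk $v_0\to v_1\to\cdots\to v_k$ in $\gG$ the positions satisfy
$$\pi^{-1}(v_0)<\pi^{-1}(v_1)<\cdots<\pi^{-1}(v_k).$$
This follows by applying the edge inequality to each consecutive pair and using transitivity of $<$ on the integers.

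Now suppose, for contradiction, that $\gG$ contains a directed cycle $v_0\to v_1\to\cdots\to v_k=v_0$ with $k\ge 1$. The chain above gives $\pi^{-1}(v_0)<\pi^{-1}(v_k)=\pi^{-1}(v_0)$, which is impossible. The case $k=1$ is a self-loop $(v_0,v_0)$, and it is already excluded because $\pi^{-1}(v_0)<\pi^{-1}(v_0)$ fails. Therefore $\gG$ has no directed cycle and is a DAG.

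As a remark, $\pi$ itself is a topological order of $\gG$. This justifies the phrase ``most recent available parent response under the topological execution order'' in the update rule: processing nodes in the order $\pi(1),\pi(2),\dots$ guarantees that all parents of a node are finalized before it is updated. I expect no real obstacle; the only subtle step is ensuring $\pi$ is a strict total order despite possible ties in $\psi$.
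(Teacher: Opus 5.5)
Your proof is correct and follows the same route as the paper. Both arguments assume a directed cycle, use that every edge of $\gE_\pi$ strictly increases the position $\pi^{-1}(\cdot)$, and derive the impossible chain $\pi^{-1}(v_0)<\cdots<\pi^{-1}(v_0)$. Your extra care about tie-breaking in $\argsort$ and the self-loop case are refinements the paper leaves implicit.
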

\vspace{-1em}
\begin{proof}
Assume for contradiction that $\gG$ contains a directed cycle 
\begin{equation}
    v_1 \to v_2 \to \cdots \to v_K \to v_1.
\end{equation}

Because every edge must go forward under $\pi$, we must simultaneously have
\begin{equation}
\pi^{-1}(v_1) < \pi^{-1}(v_2) < \cdots < \pi^{-1}(v_K) < \pi^{-1}(v_1),
\end{equation}
which is impossible. Hence, no directed cycle can exist. 
\end{proof}
\vspace{-1em}
This parameterization is simpler than detecting and repairing cycles after graph prediction because DAG validity is built directly into the action space of the policy. 

\vspace{-0.5em}
\subsection{Response-Conditioned Graph Policy} 
\vspace{-0.5em}

The graph policy consumes only the current response set, not agent identities, role labels, or model-family indicators. Let
\begin{equation}
    \gX = [r_1,\dots,r_N]^{\top} \in \mathbb{R}^{N \times d}. 
\end{equation}
A transformer encoder~\citep{vaswani2017attention} $\operatorname{Enc}_{\theta}$ maps the response set to contextualized node states
\begin{equation}
    \gH = \operatorname{Enc}_{\theta}(X) = [\vh_1, \dots, \vh_N]^{\top},
    \qquad
    \vh_n \in \mathbb{R}^{d_h}.
\end{equation}

Because the encoder operates on the response embeddings without identity-specific tokens, the policy is permutation-equivariant over the agent dimension. 

\begin{proposition}[Permutation-based identity agnosticism]
Assume that the response encoder $f$ is applied independently to each response and that $\operatorname{Enc}_{\theta}$ is permutation-equivariant. Then, for any permutation matrix $P$,
\begin{equation}
    \operatorname{Enc}_{\theta}(P\gX) = P \operatorname{Enc}_{\theta}(\gX), 
\end{equation}
 the induced graph distribution is equivariant to any relabeling of agents. 
\end{proposition}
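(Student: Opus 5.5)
The plan is to follow a relabeling of agents, given by a permutation $\sigma$ of $\{1,\dots,N\}$ with matrix $P$, through each stage of the pipeline and show that every intermediate object is transformed covariantly. In the end, the probability of any edge set $\gE$ under the original labeling should equal the probability of the relabeled edge set $\sigma(\gE)=\{(\sigma(m),\sigma(n)) : (m,n)\in\gE\}$ under the permuted labeling.

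\textbf{Step 1: embeddings, contributions and order.} Since $f$ acts independently on each response, relabeling the responses gives $r'_{\sigma(n)} = r_n$, that is, $\gX' = P\gX$. The average $r_{\text{avg}}$ is a symmetric function of $\{r_n\}$, so it is unchanged. It follows that $\psi'_{\sigma(n)} = \psi_n$. The induced order therefore satisfies $\pi' = \sigma\circ\pi$, with ties broken consistently or assumed absent; this is the one place where an assumption on tie-breaking is needed. Consequently $\pi'^{-1}(\sigma(m)) < \pi'^{-1}(\sigma(n))$ holds if and only if $\pi^{-1}(m)<\pi^{-1}(n)$, and so $\gE_{\pi'} = \sigma(\gE_\pi)$. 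By Proposition~1, both feasible sets yield DAGs, and relabeling maps one action space bijectively onto the other.

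\textbf{Step 2: encoder and edge scores.} By the hypothesis $\operatorname{Enc}_\theta(P\gX)=P\operatorname{Enc}_\theta(\gX)$, we get $\vh'_{\sigma(n)}=\vh_n$. The edge scores come from the affinity matrix of the contextualized states, $S=\gH W \gH^\top$ or a similar bilinear or pairwise form whose score for each pair depends only on $(\vh_m,\vh_n)$. They therefore transform as $S' = P S P^\top$, i.e.\ $S'_{\sigma(m)\sigma(n)} = S_{mn}$. The same holds for the edge-existence logits. The same holds for any global ``empty graph'' logit, provided it is produced by a permutation-invariant pooling of $\gH$.

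\textbf{Step 3: distribution.} The policy samples edges over $\gE_\pi$ with probabilities that are functions of the corresponding scores, for example independent Bernoullis with $p_{mn}=\sigma_{\text{sig}}(S_{mn})$. Hence $\Pr'[\sigma(\gE)] = \prod_{(m,n)\in\gE_\pi} p_{mn}^{\1[(m,n)\in\gE]}(1-p_{mn})^{1-\1[(m,n)\in\gE]} = \Pr[\gE]$, using the bijection from Step~1. This is exactly equivariance of the induced graph distribution.

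\textbf{Main obstacle.} The delicate part is not the encoder identity, which is assumed, but making precise the edge-decoder and sampling assumptions. The decoder must be a pairwise function of node states with no positional or index-dependent parameters. The sampling rule, whether factorized Bernoulli or a sparsity-constrained variant such as top-$k$, must itself commute with relabeling. I would state these as conditions implicit in the policy design and handle ties in $\psi$ by requiring either a.s.\ distinct scores or an equivariant tie-breaking rule.
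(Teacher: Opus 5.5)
Your proposal is correct and follows the same route as the paper. The paper likewise argues that permuting agents permutes $\gX$ and hence $\operatorname{Enc}_\theta(\gX)$, and that the cosine-to-mean scoring, contribution ordering, and edge construction are permutation-consistent; you carry out that bookkeeping explicitly, e.g.\ $\pi' = \sigma\circ\pi$, $\gE_{\pi'} = \sigma(\gE_\pi)$, and $\Lambda' = P\Lambda P^\top$ for the paper's $\Lambda = \gH\gH^\top$. Your tie-breaking caveat is a genuine refinement that the paper leaves implicit, since a deterministic index-based tie-break in the $\argsort$ can violate exact equivariance when two agents have equal $\psi$.
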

\vspace{-1.5em}
\begin{proof}
Since $\operatorname{Enc}_{\theta}$ (transformer) without positional encodings is permutation-equivariant, permuting agent indices permutes $\gX$ and thus $\operatorname{Enc}_\theta(\gX)$; the remaining steps (cosine-to-mean scoring, ordering, and edge construction) are permutation-consistent, so the graph distribution is equivariant. 
\end{proof}
\vspace{-0.5em}

We then predict communication edges directly from the globally contextualized hidden states. Concretely, we form a response-response score matrix from the contextualized states: 
\vspace{-0.5em}
\begin{equation}
    \Lambda = \gH \gH^{\top}. 
\end{equation}
\vspace{-0.2em}

Here, $\Lambda$ is not the final adjacency matrix; it provides edge logits that are passed through a sigmoid and then sampled to obtain the communication graph. This construction is deliberate. The hidden states in $\gH$ are already globally contextualized, so the resulting edge logits are informed by the entire response set rather than by isolated response pairs. In this way, the communication graph is read directly from the shared semantic structure induced by the encoder. 

\subsection{Response Propagation and Aggregation}

\method does not require a separate stop network or node-activation network. The graph itself determines both whether sequential communication occurs and which nodes are updated. A node is updated if and only if it has at least one incoming edge: 
\begin{equation}
    u_n = \1 \left[\sum_{m=1}^{N} \1[(m \to n) \in \gE] > 0\right]. 
\end{equation}
If $\gE = \varnothing$, then $u_n = 0$ for all nodes, no additional calls are made, and the system returns the parallel aggregate. If $\gE \neq \varnothing$, the graph induces one sequential consolidation pass. 
\begin{proposition}[Hybrid subsumption]
    The policy class of \method strictly subsumes the pure parallel regime. 
\end{proposition}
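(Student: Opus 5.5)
The plan is to prove the two halves of ``strictly subsumes'' separately. First, every behavior of the pure parallel regime is realized by some policy in the \method class. Second, some policy in the class produces behavior that no pure parallel execution can produce. Throughout, ``policy class'' means the set of maps from the response pool $\gR^{(0)}$ to a distribution over edge sets $\gE \subseteq \gE_{\pi}$, induced by $\operatorname{Enc}_\theta$ followed by $\Lambda = \gH\gH^\top$, a sigmoid, and Bernoulli sampling. The pure parallel regime is the map that always returns the weighted-centroid aggregate of $\gR^{(0)}$.

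\textbf{Inclusion.} I will exhibit parameters under which the sampled graph is empty almost surely, or in a limit. Make the final output layer of $\operatorname{Enc}_\theta$ the zero map (or scale it by $\alpha \to 0$). Then $\gH = 0$, $\Lambda = 0$, and every edge probability is $\sigma(0) = 1/2$. That is not enough, so I would add the scalar bias/temperature $b$ that is implicit in the edge logits, $\sigma(\Lambda_{mn} + b)$, and let $b \to -\infty$. If the paper's parameterization has no bias, I would instead argue through the limit of the policy class, or note that the class includes the deterministic decoding $\1[\sigma(\Lambda_{mn})>\tau]$ with threshold $\tau>1/2$.

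Once $\gE = \varnothing$, the definition of $u_n$ gives $u_n = 0$ for all $n$. The update rule then gives $\gR_n^{(1)} = \gR_n^{(0)}$, so no extra agent calls are made. The aggregation in the centroid equation then yields exactly the parallel aggregate. Hence the parallel regime is the special case $\gE=\varnothing$.

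\textbf{Strictness.} I will pick parameters producing a non-empty edge set with positive probability. With bias $b \to +\infty$, or simply any generic $\theta$ where $\sigma(\Lambda_{mn}) \in (0,1)$, each feasible edge in $\gE_\pi$ has positive probability. For $N \ge 2$, $\gE_\pi$ is non-empty because the top-contribution node can point to any later one. By the acyclicity proposition, any such edge set is a valid DAG, so the sequential pass is well defined. The resulting node $n$ with $\Pa(n)\neq\varnothing$ receives $\gA_n(\gQ, \{\gR_m^{(\star)}\})$. This is a response conditioned on other agents' outputs, which the parallel regime never produces. To make strictness concrete, I will construct a query and agents for which the conditioned call differs from $\gR_n^{(0)}$ and changes the aggregated output or the reward $R(\gG)$. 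One example is an agent that copies a parent's answer when given one.

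\textbf{Main obstacle.} The delicate step is the inclusion direction. With a pure sigmoid of $\gH\gH^\top$ and no bias, the diagonal-free logits $\vh_m^\top\vh_n$ cannot be pushed to $-\infty$ uniformly when all $\vh$ coincide. However, choosing the encoder output to make the $\vh_n$ pairwise strongly anti-aligned is impossible for $N\ge3$. So I would rely on either a learnable logit bias/temperature, or the thresholded (deterministic) decoding, to make the empty graph attainable, and state that assumption explicitly.
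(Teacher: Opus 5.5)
Your proof has the same two-step structure as the paper's. The empty graph forces $u_n=0$ for every $n$ and so returns the parallel aggregate. Any non-empty $\gE\subseteq\gE_\pi$, which has positive probability whenever $N\ge 2$, triggers at least one sequential update. The paper's proof only asserts that all edge probabilities can be made ``zero/small''. You go further: you formalize the policy class, make the empty graph realizable through an explicit logit bias or threshold decoding, and give a concrete agent whose conditioned call changes the output. Under that explicit assumption your argument is correct.

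However, the reasoning in your ``main obstacle'' paragraph is off, and one of your fallbacks fails.

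\textbf{Anti-alignment is not needed.} To make every off-diagonal logit very negative, take $\vh_1,\dots,\vh_N$ to be the vertices of a regular simplex centred at the origin, scaled to norm $s$. This is possible whenever $d_h\ge N-1$, and gives $\vh_m^\top\vh_n=-s^2/(N-1)\to-\infty$ for all $m\neq n$, for any $N$. So there is no geometric impossibility for $N\ge 3$.

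\textbf{The real obstruction is permutation equivariance.} If two drafts have identical embeddings, $r_m=r_n$, then swapping $m$ and $n$ leaves $\gX$ unchanged. Equivariance then forces $\vh_m=\vh_n$, so $\Lambda_{mn}=\|\vh_m\|^2\ge 0$. Whichever of $(m,n)$ and $(n,m)$ lies in $\gE_\pi$ is therefore sampled with probability at least $1/2$, for every $\theta$. This is exactly when your ``coinciding $\vh$'' case occurs.

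\textbf{Consequence for your fallbacks.} On such inputs no bias-free parameter, and no limit of such parameters, gives the always-empty policy. Your ``limit of the policy class'' option therefore does not work, and the extra bias or threshold is genuinely required under your strong reading (the constant-parallel map belongs to the class).

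\textbf{Weaker reading.} If ``attainable'' means only that the empty graph lies in the support, no extra assumption is needed. In that case $\Pr[\gE=\varnothing]=\prod_{(m,n)\in\gE_\pi}(1-p_{m\to n})>0$ for every $\theta$, since each sigmoid output is strictly below $1$.
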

\vspace{-1.2em}
\begin{proof}
    The empty graph $\gE=\varnothing$ is always attainable (all edge probabilities zero/small), in which case no updates occur, and the method reduces to pure parallel execution with aggregation. Any $\gE\neq\varnothing$ induces at least one sequential update, so the policy class strictly contains the parallel regime. 
\end{proof}
\vspace{-0.8em}
When $\gE \neq \varnothing$, sequential propagation follows the contribution order $\pi$. For node $n$, define 
\begin{equation}
    \Pa(n) = \{m: (m \to n) \in \gE\}.
\end{equation}
The updated response is then
\begin{equation}
    \gR_n^{(1)} =
    \begin{cases}
        \gA_n\!\left(\gQ, \{\gR_m^{(\star)} : m \in \Pa(n)\}\right), & \Pa(n) \neq \varnothing, \\
        \gR_n^{(0)}, & \Pa(n) = \varnothing,
    \end{cases}
\end{equation}
where $\gR_m^{(\star)}$ denotes the most recent available parent response under the topological execution order. Because all edges go forward under $\pi$, each parent response is available when a destination node is~updated. 

After either staying in the parallel regime or completing one propagation pass, \method selects the final answer without using an external judge. Let $\tilde{\gR}_n$ denote the final candidate response for agent $n$ and let
\begin{equation}
    z_n = f(\tilde{\gR}_n),
    \qquad
    z_{\text{avg}} = \frac{1}{N} \sum_{n=1}^{N} z_n,
    \qquad
    w_n = \cos(z_n, z_{\text{avg}}).
\end{equation}
We then compute the contribution-weighted centroid
% \begin{equation}
    $z_{\text{centroid}} = \frac{\sum_{n=1}^{N} w_n z_n}{\sum_{n=1}^{N} w_n}$
% \end{equation}
and select
\begin{equation}
    n^{\star} = \arg\max_n \cos(z_n, z_{\text{centroid}}),
    \qquad
    \hat{y} = \tilde{\gR}_{n^{\star}}.
\end{equation}
This aggregation rule directly inherits the response-conditioned, judge-free philosophy of SelfOrg~\citep{tastan2026stochastic}. 
\subsection{Training Objective} 

The deployment objective is the final task correctness. For a labeled example $(\gQ,y)$, let $\hat{y}_{\gG}$ denote the final output under graph $\gG$. In the current implementation, correctness is checked with the same verifier used in evaluation, instantiated as an \texttt{xVerify}-based binary reward \citep{chen2025xverifyefficientanswerverifier}. We therefore define the task reward 
\begin{equation}
    R_{\mathrm{task}}(\gG) = \mathds{1}\left[  \operatorname{Eval}(\hat{y}_{\gG}, y) = 1 \right]. 
\end{equation}
Because the order $\pi$ is fixed by the contribution scores, the graph log-probability decomposes over feasible forward edges:
\begin{equation}
    \log p_{\theta}(\gE \mid \gX,\pi) = \sum_{(m,n)\in \gE_{\pi}} \Big( e_{m \to n}\log p_{m \to n} + (1-e_{m \to n})\log(1-p_{m \to n}) \Big). 
\end{equation}
The algorithm also applies an explicit sparsity penalty to the sampled graph reward in the same spirit as topology-economical methods \citep{zhang2025agentprune}. Let
% \begin{equation}
    $M = |\gE_{\pi}| = \frac{N(N-1)}{2}$
% \end{equation}
be the number of feasible forward edges under the contribution-defined order. For a sampled graph $\gG$, we define the sparsity-regularized reward
\begin{equation}
    R_{\mathrm{sp}}(\gG)
    =
    R_{\mathrm{task}}(\gG)
    -
    \lambda_{\mathrm{sp}} \frac{|\gE|}{M},
\end{equation}
where $\lambda_{\mathrm{sp}} \ge 0$ controls how strongly dense communication graphs are penalized.

We train \method with REINFORCE and a batch-mean baseline. For a mini-batch of sampled graphs $\{\gG^{(i)}\}_{i=1}^{B}$, we set
\begin{equation}
    b = \frac{1}{B}\sum_{i=1}^{B} R_{\mathrm{sp}}(\gG^{(i)}),
    \qquad
    A^{(i)} = R_{\mathrm{sp}}(\gG^{(i)}) - b.
\end{equation}
The policy-gradient term is
\vspace{-1em}
\begin{equation}
    \mathcal{L} = -\frac{1}{B}\sum_{i=1}^{B} A^{(i)} \log p_{\theta}(\gE^{(i)} \mid \gX^{(i)},\pi^{(i)}). 
\end{equation}
Finally, we obtain the final optimization goal: it is REINFORCE with batch-mean advantage, while sparsity is enforced through the edge-count penalty in the reward. 

The full procedure is summarized in Algorithm~\ref{alg:tselforg}. 
\vspace{-.5em}
\section{Experiments}
\label{tselforg: experiments}

The empirical study is designed to demonstrate the generalizability of the learned communication policy. Rather than only reporting in-domain performance for the training configuration, we evaluate whether a response-conditioned policy learned in one setting can be reused when the number of agents, the task, or the underlying agent changes, thus emphasizing training efficiency. 
\vspace{-0.5em}
\subsection{Experimental Setup}
\vspace{-0.5em}
The base training setting uses Qwen2.5-1.5B-Instruct agents~\citep{qwen2025technicalreport} on AQUA-RAT~\citep{ling2017AQUARAT} and GSM8K~\citep{cobbe2021training} with $N=10$ agents. Unless otherwise specified, the policy is trained with REINFORCE, a batch-mean baseline, batch size $32$, $50$ policy updates, learning rate $0.1$, dropout $0.3$, and edge-count sparsity coefficient $\lambda_{\mathrm{sp}}=0.1$. The policy architecture is kept fixed: a one-layer, one-head transformer encoder followed by the $\gH\gH^\top$ edge construction described in Section~\ref{t-selforg: methodology}. 

We consider single-agent system, chain-of-thought (CoT)~\citep{wei2022chain}, self-consistency~\citep{wang2023selfconsistency}, SelfOrg$^{\star}$\footnote{SelfOrg$^{\star}$ indicates SelfOrg with a single sequential communication round.}~\citep{tastan2026stochastic}, and topology-learning or pruning methods, including GPTSwarm~\citep{zhuge24gptswarm}, AgentPrune~\citep{zhang2025agentprune}, and G-Designer~\citep{zhang2025gdesigner} as baselines. While the primary metric is accuracy of the final response, we also report the mean edge count and token consumption usage as proxies for the communication burden and inference cost, respectively. 

\vspace{-0.5em}
\subsection{Main Results} 
Table~\ref{tab:main-results} reports the main comparison across AQUA-RAT~\citep{ling2017AQUARAT}, HumanEval~\citep{chen2021humaneval}, and GSM8K~\citep{cobbe2021training}. \method achieves the best average accuracy, $60.90\%$, improving over SelfOrg$^{\star}$ while also obtaining the best average rating. Its gains are strongest on AQUA-RAT and GSM8K while remaining competitive on HumanEval. 

\begin{table}[h]
    \centering
    \caption{Main comparison across AQUA-RAT, HumanEval, and GSM8K. Accuracy is reported as mean $\pm$ std over runs. Rating uses average rating only. Token usage is total prompt plus completion tokens. Lower average rating and lower token usage are better.} 
    \label{tab:main-results}
    \small
    \resizebox{\linewidth}{!}{
        \begin{tabular}{r|ccc|ccr}
            \toprule
            \rowcolor{lightgray}
            \textbf{Method}
                & \textbf{AQUA-RAT}
                & \textbf{HumanEval}
                & \textbf{GSM8K}
                & \textbf{Avg. Acc.}
                & \textbf{Avg. Rating}
                & \textbf{Token Usage} \\
            \midrule
            Single
                & $52.62{\pm}0.99$
                & $50.41{\pm}3.73$
                & $70.07{\pm}1.30$
                & $57.70{\pm}2.01$
                & $5.67$
                & \tokcell{1149321}{100} \\
            CoT
                & $54.46{\pm}1.94$
                & $45.93{\pm}1.27$
                & $70.47{\pm}1.03$
                & $56.95{\pm}1.41$
                & $5.33$
                & \tokcell{1298471}{100} \\
            SC
                & $56.82{\pm}2.27$
                & $12.52{\pm}3.41$
                & $71.53{\pm}0.90$
                & $46.96{\pm}2.20$
                & $5.00$
                & \tokcell{11215153}{78} \\
            SelfOrg$^{\star}$
                & $56.46{\pm}1.28$
                & $\mathbf{52.03{\pm}2.44}$
                & $\underline{72.60{\pm}0.60}$
                & $\underline{60.36{\pm}1.44}$
                & $\underline{2.67}$
                & \tokcell{28405941}{26} \\
            GPTSwarm
                & $55.91{\pm}1.97$
                & $36.79{\pm}1.96$
                & $69.33{\pm}2.61$
                & $54.01{\pm}2.18$
                & $6.33$
                & \tokcell{38021213}{0} \\
            AgentPrune
                & $\underline{57.58{\pm}2.84}$
                & $29.47{\pm}0.93$
                & $71.07{\pm}1.94$
                & $52.71{\pm}1.90$
                & $4.00$
                & \tokcell{31444709}{18} \\
            GDesigner
                & $57.13{\pm}1.74$
                & $28.25{\pm}1.54$
                & $70.00{\pm}1.06$
                & $51.79{\pm}1.44$
                & $5.67$
                & \tokcell{31798540}{17} \\
            \midrule 
            \rowcolor{lightblue} 
            \method
                & $\mathbf{57.74{\pm}2.31}$
                & $\underline{51.42{\pm}0.70}$
                & $\mathbf{73.53{\pm}0.23}$
                & $\mathbf{60.90{\pm}1.08}$
                & $\mathbf{1.33}$
                & \tokcell{18363825}{67} \\
            \bottomrule
        \end{tabular}
    }
\end{table}
% \vspace{-1.5em}
\begin{figure}[h] 
    \centering
    \includegraphics[width=\linewidth]{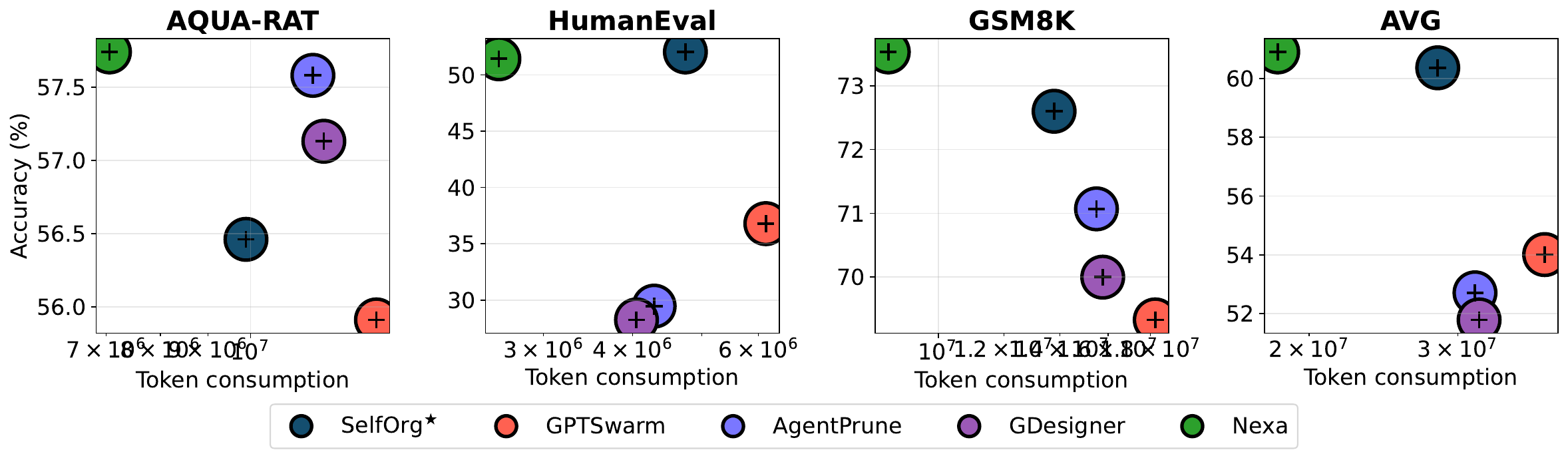}
    \vspace{-1em}
    \caption{Accuracy-cost tradeoff for multi-agent system baselines across three tasks. Each point corresponds to one method, with the x-axis showing total token usage including prompt and completion tokens and the y-axis showing mean accuracy.} % \method achieves the best average accuracy while using substantially fewer tokens than the highest-cost topology-learning baselines, indicating a stronger accuracy-efficiency tradeoff. 
    \label{fig: main-results-token-to-acc} 
    \vspace{-1.5em}
\end{figure}

The efficiency results are central to the comparison. \method uses $18.36$M total tokens, compared with $28.41$M for SelfOrg$^*$, $38.02$M for GPTSwarm, $31.44$M for AgentPrune, and $31.80$M for GDesigner. It therefore reduces token usage by about $35\%$ relative to SelfOrg$^*$ and by more than $50\%$ relative to GPTSwarm, while achieving the highest average accuracy. Figure~\ref{fig: main-results-token-to-acc} makes this tradeoff explicit: in the average panel, \method occupies the favorable region of the accuracy-cost plane, indicating that its improvements are not simply the result of spending more tokens but of selectively invoking communication when the response pool warrants it.

\vspace{-0.5em}
\subsection{Generalizability Across Different Axes} 

\paragraph{Number of agents.}
\begin{wrapfigure}{r}{0.52\linewidth}
    \vspace{-2.25em} 
    \centering
    \includegraphics[width=\linewidth]{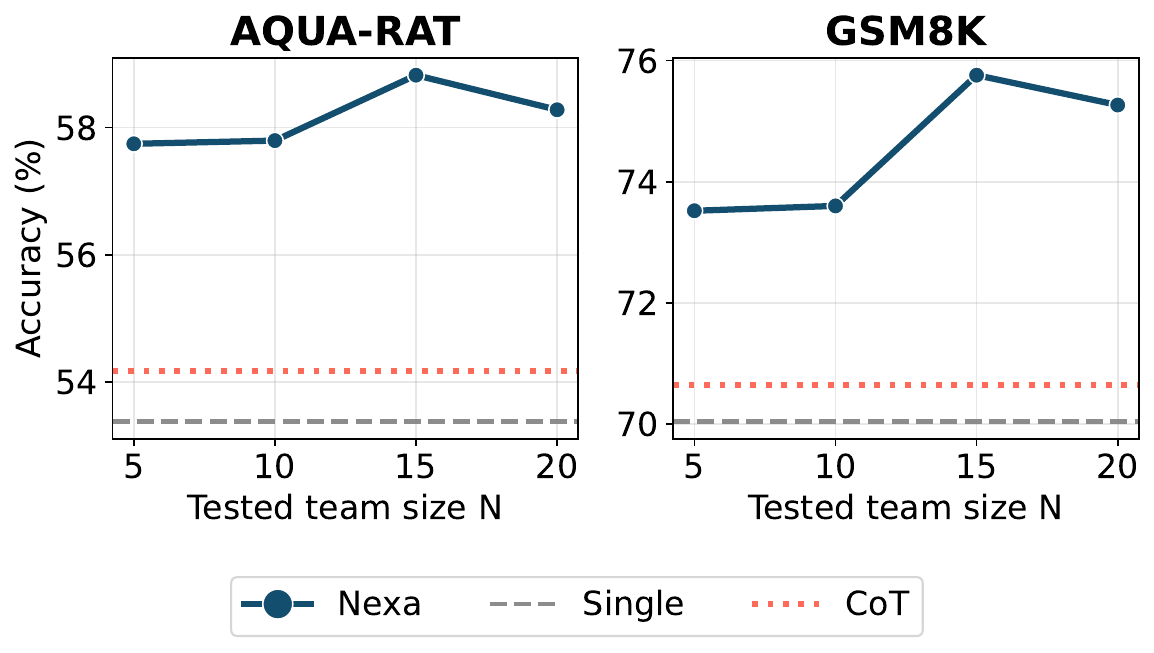}
    \vspace{-1.0em}
    \caption{Agent-count transfer for \method. The policy is trained with $N{=}10$ Qwen2.5-1.5B agents and evaluated without retraining at $N{\in}\{5,\ldots,20\}$.} % Dashed lines denote single-call and chain-of-thought baselines. 
    \label{fig:agent-count-transfer}
    \vspace{-1.em}
\end{wrapfigure}
We first examine generalizability across the number of agents. \method is trained with $N=10$ agents and evaluated without retraining for $N\in\{5,10,15,20\}$, keeping the task and agent backbone fixed. This setting tests whether the learned graph policy behaves as a reusable response-conditioned rule rather than as a memorized topology for a fixed team size. As shown in Figure~\ref{fig:agent-count-transfer}, \method remains above the single-call and chain-of-thought baselines~\citep{wei2022chain} for all tested values of $N$ on both AQUA-RAT and GSM8K. Accuracy peaks at $N=15$ for both tasks, suggesting that the policy can benefit from additional candidate responses beyond the training configuration while still remaining stable when the team size is smaller or larger than~$N=10$.

\vspace{-0.5em}
\paragraph{Task transfer.} We next consider generalizability across tasks while keeping the model family, model size, and training team size fixed. \method is trained with $N=10$ Qwen2.5-1.5B agents on either AQUA-RAT or GSM8K, then evaluated without retraining on both tasks. Figure~\ref{fig:task-transfer} compares same-task training against cross-task training at two tested team sizes, $N=5$ and $N=20$. Across all four settings, the transfer gap remains small: $0.18$ and $0.14$ points on AQUA-RAT, and $0.08$ and $0.05$ points on GSM8K. This suggests that \method may learn a reusable response-conditioned rule rather than merely memorizing a task-specific pattern, although this requires further confirmation under more heterogeneous model families and agent pools. 
\vspace{-1em}
\begin{figure}[h]
    \centering
    \includegraphics[width=\linewidth]{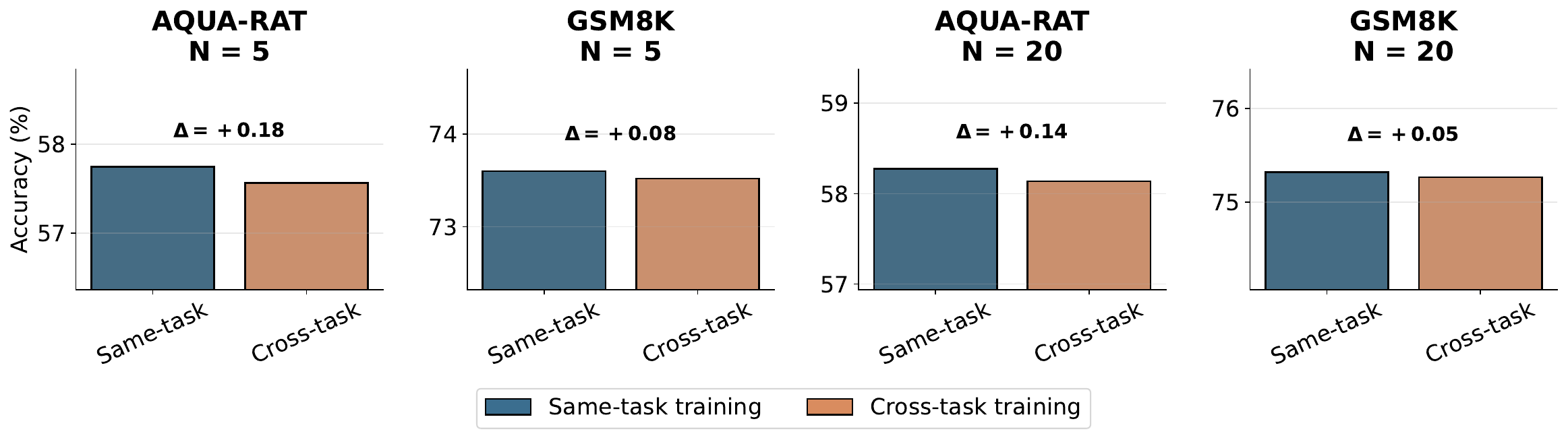}
    \vspace{-1.75em} 
    \caption{Task-transfer comparison for \method on Qwen2.5-1.5B.} % The policy is trained with $N=10$ agents on either the same task as evaluation or the other task, then evaluated without retraining on AQUA-RAT and GSM8K at $N=5$ and $N=20$. Each panel reports accuracy for same-task and cross-task training; $\Delta$ denotes the same-task minus cross-task accuracy gap. % The small gaps across both tasks and team sizes indicate that the learned communication policy transfers across task distributions with little loss. 
    \label{fig:task-transfer}
    % \vspace{-2em}
\end{figure}
\paragraph{Model scale generalizability.}
We then evaluate whether the learned communication policy transfers across model scales. \method is trained using Qwen2.5-1.5B agents and evaluated without retraining on Qwen2.5-7B agents, then compared against a policy trained directly with Qwen2.5-7B agents. As shown in Figure~\ref{fig:model-scale-transfer}, the 1.5B-trained policy closely matches the 7B-trained policy on both tasks: $90.48$ versus $90.52$ on GSM8K, and $76.98$ versus $77.40$ on AQUA-RAT. This suggests that the learned graph policy is not tightly coupled to the competence level of the training backbone and can be reused when deployed with a stronger model.
\begin{wrapfigure}{l}{0.5\linewidth} 
    \vspace{-1em}
    \centering
    \includegraphics[width=\linewidth]{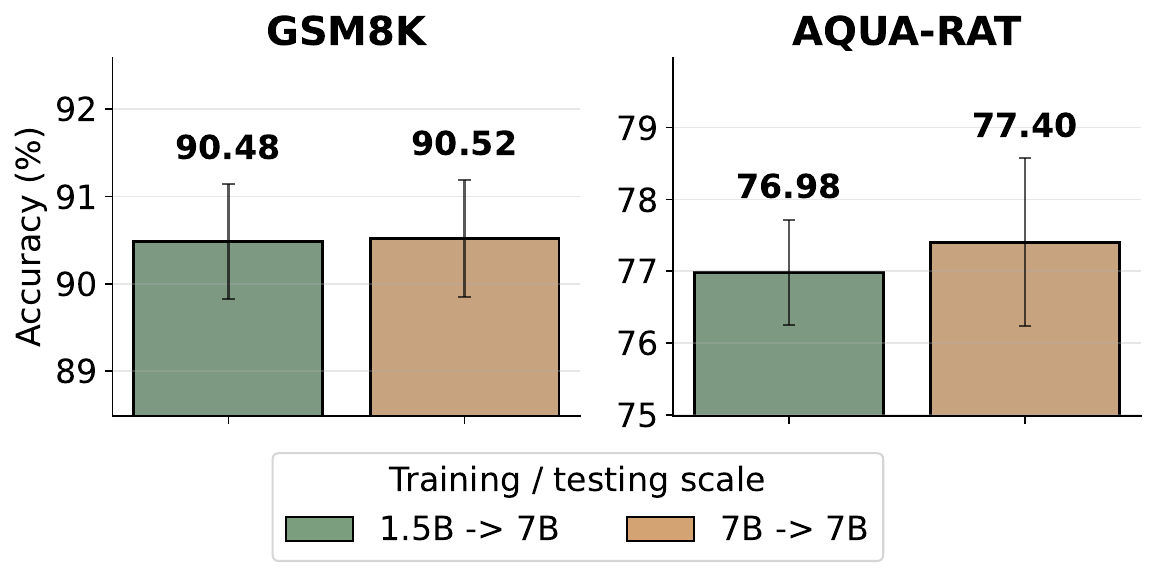}
    \vspace{-1.5em}
    \caption{Model-scale transfer for \method.} % We compare a policy trained on Qwen2.5-1.5B and evaluated on Qwen2.5-7B against a policy trained and evaluated on Qwen2.5-7B. Results are reported on GSM8K and AQUA-RAT with standard deviations. 
    \label{fig:model-scale-transfer}
    \vspace{-0.8em}
\end{wrapfigure}

\vspace{-2.em} 

\paragraph{Model generation transfer.} 
Finally, we evaluate whether the learned communication policy remains usable when the underlying model is updated to a newer generation. \method trained on Qwen2.5-1.5B is evaluated without retraining on Qwen3.5-2B~\citep{qwen3.5} and compared against a policy trained directly on Qwen3.5-2B. At $N=5$, the transferred policy reaches $77.40$, compared with $77.73$ for the target-generation policy. The resulting $0.17$-point gap suggests that an existing policy can remain effective after a model upgrade, reducing the need to retrain the communication controller every time the base model is changed.

\vspace{-1em}
\subsection{How Communication Changes Answers}
\vspace{-0.7em}
\begin{wrapfigure}{r}{0.32\linewidth} 
    \vspace{-3.2em}
    \centering
    \includegraphics[width=\linewidth]{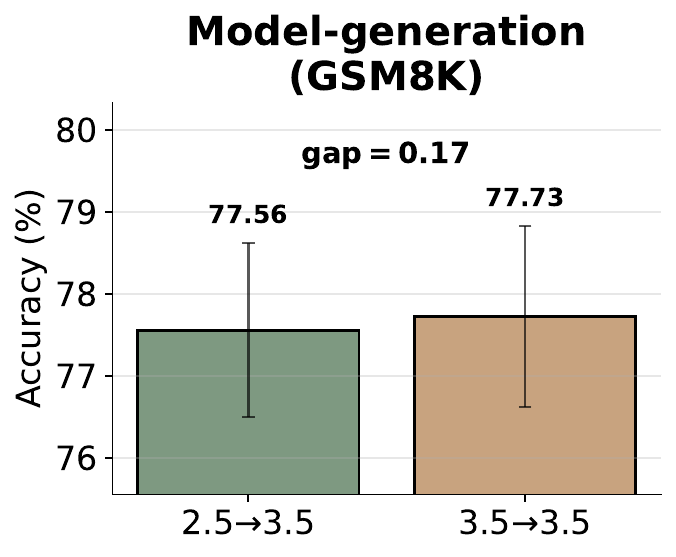}
    \vspace{-1.0em}
    \caption{Model-generation transfer for \method.} % The transferred policy closely matches the target-generation policy, with a small gap. 
    \label{fig:model-generation-transfer}
    \vspace{-1.5em}
\end{wrapfigure}
We further analyze how \method changes answers after communication by decomposing each example according to whether the initial draft (parallel execution responses) and final answer (sequential execution responses) are correct. Figure~\ref{fig:policy-behavior-7b} reports rescue, harm, and preservation rates for Qwen2.5-7B agents on GSM8K. As the tested team size increases from $N=5$ to $N=20$, the rescue rate rises from $19.2\%$ to $23.8\%$, showing that additional agents provide useful opportunities for correcting initially wrong answers. At the same time, harm remains low, between $1.6\%$ and $2.5\%$, while preservation stays above $97.5\%$ across all tested values of $N$. These results suggest that \method does not simply perturb answers through extra communication; it mostly preserves correct predictions while selectively improving initially incorrect ones. 

Additional sparsity diagnostics in Appendix~\ref{app:sparse-communication} show that \method often selects low-edge communication plans, indicating that the learned policy does not rely on dense all-to-all interaction as team size increases. 

\begin{figure}[H] 
    \centering
    \includegraphics[width=0.85\linewidth]{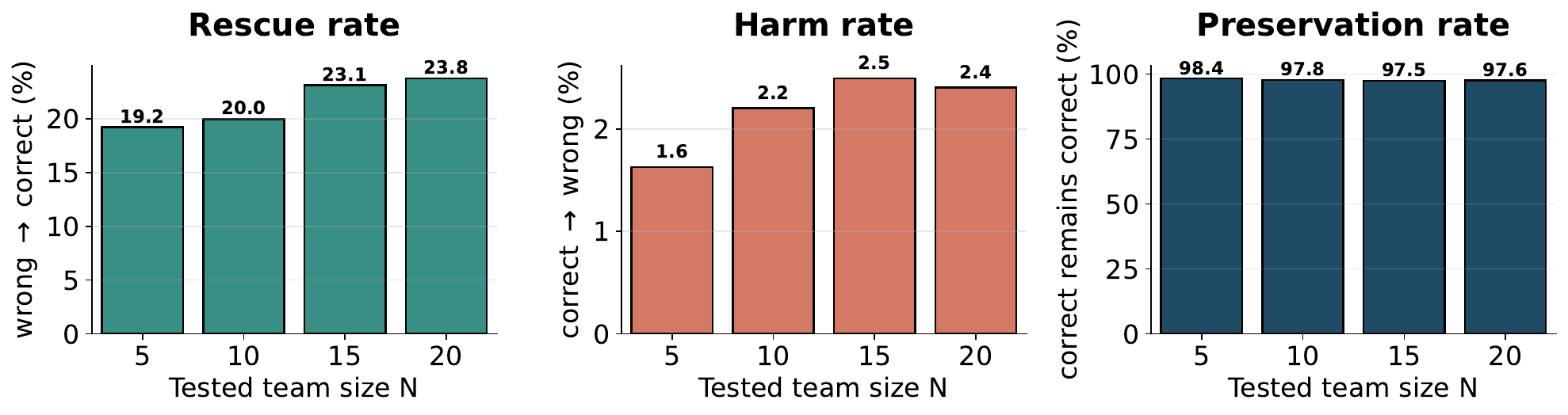}
    \caption{Policy behavior analysis for \method with Qwen2.5-7B agents on GSM8K. Rescue, harm, and preservation rates compare initial draft correctness with final answer correctness after communication.} 
    % Rescue rate measures the fraction of initially incorrect draft answers that become correct after communication, harm rate measures the fraction of initially correct drafts that become incorrect, and preservation rate measures the fraction of initially correct drafts that remain correct. % Across tested team sizes, \method rescues $19.2\%$-$23.8\%$ of initially wrong answers, while harming only $1.6\%$-$2.5\%$ of initially correct answers and preserving more than $97.5\%$ of correct drafts. 
    \label{fig:policy-behavior-7b}
\end{figure}
\vspace{-.5em}

\vspace{-1em}
\subsection{Ablations} 

\paragraph{Policy backbone.} 
\begin{wraptable}{r}{0.55\linewidth}
    \vspace{-1.85em} 
    \centering
    \caption{Backbone ablation on GSM8K with Qwen2.5-1.5B agents. Accuracy is mean $\pm$ std. over three runs.} 
    \label{tab:backbone-ablation}
    \small
    \resizebox{\linewidth}{!}{
        \begin{tabular}{l|cc}
            \toprule
            \rowcolor{lightgray}\textbf{\method (Backbone)} & $\mathbf{N=5}$ & $\mathbf{N=10}$ \\ 
            \midrule
            \method (Transformer) 
                & $72.53{\pm}1.17$ 
                & $75.00{\pm}0.35$ \\
            \method (GNN) 
                & $72.47{\pm}1.68$  
                & $74.87{\pm}1.27$ \\
            \bottomrule
        \end{tabular}
    }
    \vspace{-1.0em}
\end{wraptable}
\method is not tied to a single policy-network backbone. Although our main implementation uses a Transformer to predict response-conditioned communication graphs, the same formulation can be instantiated with other graph-prediction architectures. As one example, we adapt the GNN architecture from GDesigner, originally used for agent-role-specific (and fixed-agent-number) design, to \method's response-conditioned communication graph prediction setting while keeping the rest of the training and communication procedure unchanged. Table~\ref{tab:backbone-ablation} shows that this GNN backbone closely matches the Transformer backbone on GSM8K with Qwen2.5-1.5B agents, suggesting that the core benefit comes from the \method formulation rather than a specific neural backbone. 

\vspace{-.5em}
\paragraph{Policy optimization.} 
\begin{wraptable}{r}{0.48\linewidth}
    \vspace{-1.85em}
    \centering
    \caption{Policy-optimization ablation on AQUA-RAT with $N{=}5$ Qwen2.5-1.5B agents.}
    \label{tab:optimizer-ablation}
    \small
    \resizebox{\linewidth}{!}{
    \begin{tabular}{lcc}
        \toprule
        \rowcolor{lightgray}\textbf{Optimizer} & \textbf{AQUA$\rightarrow$AQUA} & \textbf{GSM8K$\rightarrow$AQUA} \\ 
        \midrule
        GRPO & $57.56 \pm 3.83$ & $57.48 \pm 2.30$ \\
        PG   & $57.74 \pm 2.31$ & $57.56 \pm 1.49$ \\
        \bottomrule
    \end{tabular}
    }
    \vspace{-1.0em}
\end{wraptable}
We also compare the policy-gradient objective used in \method with a GRPO-style alternative. The ablation is conducted on AQUA-RAT with Qwen2.5-1.5B agents at $N=5$, considering both same-task training and cross-task transfer from GSM8K. As shown in Table~\ref{tab:optimizer-ablation}, PG slightly outperforms GRPO in both settings, with $57.74$ versus $57.56$ for AQUA$\rightarrow$AQUA and $57.56$ versus $57.48$ for GSM8K$\rightarrow$AQUA. The gaps are small, indicating that the learned communication policy is not highly sensitive. 

\vspace{-1.0em}
\section{Related Works}
\vspace{-0.75em}
\label{tselforg: literature}

\paragraph{LLM-based multi-agent collaboration.} Multi-agent LLM systems have been studied as role-based societies, conversational workflows, and dynamically routed agent networks. CAMEL instantiates role-playing agents for cooperative problem solving~\citep{li2023camel}, ChatDev organizes specialized agents into staged communicative workflows~\citep{qian2024chatdev}, AutoGen provides a general framework for multi-agent conversations~\citep{wu2024autogen}, and AgentVerse studies collaborative behaviors across agent groups~\citep{chen2024agentverse}. DyLAN adapts the active agent set during task solving~\citep{liu2024dylan}, while multi-agent debate methods use disagreement to improve reasoning or factuality~\citep{du2023improving,liang2024MAD}. Multiagent finetuning further studies whether diverse reasoning chains can improve a base model through self-improvement~\citep{subramaniam2025multiagent}. 
These systems show that collaboration can improve reasoning, but they typically require a chosen communication protocol, a task-specific workflow, or an explicit judging mechanism. \method instead begins with independent responses and learns whether any sequential communication should occur at all. 
\vspace{-.5em}
\paragraph{Communication topology and workflow design.} Several recent methods treat agent orchestration as a graph or workflow optimization problem. GPTSwarm represents language agents as optimizable computational graphs~\citep{zhuge24gptswarm}; AgentPrune removes unnecessary communication to reduce costs~\citep{zhang2025agentprune}; G-Designer learns communication topologies with graph neural networks~\citep{zhang2025gdesigner}; and MacNet studies scaling laws for LLM-based multi-agent collaboration~\citep{qian2025scaling}. Related work also explores training LLMs to construct multi-agent systems~\citep{ye2025masgpt}, automated agentic workflow generation~\citep{hu2025automated,zhang2025aflow}, decentralized evolutionary coordination~\citep{yang2025agentnet}, self-evolving agent profiles~\citep{lu2024morphagent}, heterogeneous multi-agent systems~\citep{ye2025xmas}, and unified experimental platforms for multi-agent evaluation~\citep{ye2025maslab}. \method is closest in spirit to graph-based topology learning but differs in three ways: the graph is conditioned on the realized response pool rather than only on a task or role template; the empty graph is a valid decision corresponding to pure parallel execution; and sparsity is controlled directly through an edge-count penalty in the task reward. 
\vspace{-.5em}
\paragraph{Judge-free aggregation and acyclicity.} 
Response selection and ensemble fusion are often performed by majority voting, learned rankers, or generative fusion models such as LLM-Blender~\citep{jiang2023llmblender}; other systems introduce credibility scores or adversary-resistant judges~\citep{ebrahimi2025adversary}. SelfOrg takes a different route by estimating response contribution from semantic embeddings and using that signal to organize communication without an external judge~\citep{tastan2026stochastic}. Its contribution score is motivated by Shapley-style valuation~\citep{shapley1953value, tastan2025aequa} and can be computed from sentence embeddings~\citep{reimers2019sentencebert}. \method keeps this judge-free contribution ordering but replaces the stochastic self-organization rule with a trainable structure policy. The method, therefore, preserves the stable ordering principle of SelfOrg while learning the forward edges that determine whether and where refinement should happen. 
\vspace{-.5em}
\section{Conclusion}
\vspace{-.5em}
We introduced \method, a response-conditioned policy that bridges parallel and sequential multi-agent execution by learning sparse acyclic communication graphs from initial agent drafts. The method remains lightweight and judge-free, can reduce to pure parallel execution, and improves the accuracy-cost tradeoff while transferring across tasks, agent counts, and model settings.

\clearpage 

\bibliographystyle{plainnat}
\bibliography{references}

@inproceedings{tastan2026stochastic,
  title = {Stochastic Self-Organization in Multi-Agent Systems},
  author = {Tastan, Nurbek and Horv{\'a}th, Samuel and Nandakumar, Karthik},
  booktitle = {The Fourteenth International Conference on Learning Representations},
  year = {2026},
  url = {https://openreview.net/forum?id=rS3Jb9AAej},
}

@InProceedings{zhuge24gptswarm,
    title = 	 {{GPTS}warm: Language Agents as Optimizable Graphs},
    author =       {Zhuge, Mingchen and Wang, Wenyi and Kirsch, Louis and Faccio, Francesco and Khizbullin, Dmitrii and Schmidhuber, J\"{u}rgen},
    booktitle = 	 {Proceedings of the 41st International Conference on Machine Learning},
    pages = 	 {62743--62767},
    year = 	 {2024},
    editor = 	 {Salakhutdinov, Ruslan and Kolter, Zico and Heller, Katherine and Weller, Adrian and Oliver, Nuria and Scarlett, Jonathan and Berkenkamp, Felix},
    volume = 	 {235},
    series = 	 {Proceedings of Machine Learning Research},
    month = 	 {21--27 Jul},
    publisher =    {PMLR},
    url = 	 {https://proceedings.mlr.press/v235/zhuge24a.html}
}

@inproceedings{zhang2025agentprune,
    title={Cut the Crap: An Economical Communication Pipeline for {LLM}-based Multi-Agent Systems},
    author={Guibin Zhang and Yanwei Yue and Zhixun Li and Sukwon Yun and Guancheng Wan and Kun Wang and Dawei Cheng and Jeffrey Xu Yu and Tianlong Chen},
    booktitle={The Thirteenth International Conference on Learning Representations},
    year={2025},
    url={https://openreview.net/forum?id=LkzuPorQ5L}
}

@inproceedings{zhang2025gdesigner,
    title={G-Designer: Architecting Multi-agent Communication Topologies via Graph Neural Networks},
    author={Guibin Zhang and Yanwei Yue and Xiangguo Sun and Guancheng Wan and Miao Yu and Junfeng Fang and Kun Wang and Tianlong Chen and Dawei Cheng},
    booktitle={Forty-second International Conference on Machine Learning},
    year={2025},
    url={https://openreview.net/forum?id=LpE54NUnmO}
}

@inproceedings{wu2024autogen,
    title={AutoGen: Enabling Next-Gen {LLM} Applications via Multi-Agent Conversations},
    author={Qingyun Wu and Gagan Bansal and Jieyu Zhang and Yiran Wu and Beibin Li and Erkang Zhu and Li Jiang and Xiaoyun Zhang and Shaokun Zhang and Jiale Liu and Ahmed Hassan Awadallah and Ryen W White and Doug Burger and Chi Wang},
    booktitle={First Conference on Language Modeling},
    year={2024},
    url={https://openreview.net/forum?id=BAakY1hNKS}
}

@article{ebrahimi2025adversary,
  title={An Adversary-Resistant Multi-Agent LLM System via Credibility Scoring},
  author={Ebrahimi, Sana and Dehghankar, Mohsen and Asudeh, Abolfazl},
  journal={arXiv preprint arXiv:2505.24239},
  year={2025}
}

@article{ye2025xmas,
  title={X-MAS: Towards Building Multi-Agent Systems with Heterogeneous LLMs},
  author={Ye, Rui and Liu, Xiangrui and Wu, Qimin and Pang, Xianghe and Yin, Zhenfei and Bai, Lei and Chen, Siheng},
  journal={arXiv preprint arXiv:2505.16997},
  year={2025}
}

@inproceedings{ye2025masgpt,
    title={{MAS}-{GPT}: Training {LLM}s to Build {LLM}-based Multi-Agent Systems},
    author={Rui Ye and Shuo Tang and Rui Ge and Yaxin Du and Zhenfei Yin and Siheng Chen and Jing Shao},
    booktitle={Forty-second International Conference on Machine Learning},
    year={2025},
    url={https://openreview.net/forum?id=3CiSpY3QdZ}
}

@article{ye2025maslab,
  title={MASLab: A Unified and Comprehensive Codebase for LLM-based Multi-Agent Systems},
  author={Ye, Rui and Huang, Keduan and Wu, Qimin and Cai, Yuzhu and Jin, Tian and Pang, Xianghe and Liu, Xiangrui and Su, Jiaqi and Qian, Chen and Tang, Bohan and others},
  journal={arXiv preprint arXiv:2505.16988},
  year={2025}
}

@inproceedings{qian2025scaling,
    title={Scaling Large Language Model-based Multi-Agent Collaboration},
    author={Chen Qian and Zihao Xie and YiFei Wang and Wei Liu and Kunlun Zhu and Hanchen Xia and Yufan Dang and Zhuoyun Du and Weize Chen and Cheng Yang and Zhiyuan Liu and Maosong Sun},
    booktitle={The Thirteenth International Conference on Learning Representations},
    year={2025},
    url={https://openreview.net/forum?id=K3n5jPkrU6}
}

@misc{qwen2025technicalreport,
      title={Qwen2.5 Technical Report}, 
      author={Qwen and : and An Yang and Baosong Yang and Beichen Zhang and Binyuan Hui and Bo Zheng and Bowen Yu and Chengyuan Li and Dayiheng Liu and Fei Huang and Haoran Wei and Huan Lin and Jian Yang and Jianhong Tu and Jianwei Zhang and Jianxin Yang and Jiaxi Yang and Jingren Zhou and Junyang Lin and Kai Dang and Keming Lu and Keqin Bao and Kexin Yang and Le Yu and Mei Li and Mingfeng Xue and Pei Zhang and Qin Zhu and Rui Men and Runji Lin and Tianhao Li and Tianyi Tang and Tingyu Xia and Xingzhang Ren and Xuancheng Ren and Yang Fan and Yang Su and Yichang Zhang and Yu Wan and Yuqiong Liu and Zeyu Cui and Zhenru Zhang and Zihan Qiu},
      year={2025},
      eprint={2412.15115},
      archivePrefix={arXiv},
      primaryClass={cs.CL},
      url={https://arxiv.org/abs/2412.15115}, 
}

@inproceedings{li2023camel,
    title={{CAMEL}: Communicative Agents for ''Mind'' Exploration of Large Language Model Society},
    author={Guohao Li and Hasan Abed Al Kader Hammoud and Hani Itani and Dmitrii Khizbullin and Bernard Ghanem},
    booktitle={Thirty-seventh Conference on Neural Information Processing Systems},
    year={2023},
    url={https://openreview.net/forum?id=3IyL2XWDkG}
}

@inproceedings{qian2024chatdev,
    title = "{C}hat{D}ev: Communicative Agents for Software Development",
    author = "Qian, Chen  and
      Liu, Wei  and
      Liu, Hongzhang  and
      Chen, Nuo  and
      Dang, Yufan  and
      Li, Jiahao  and
      Yang, Cheng  and
      Chen, Weize  and
      Su, Yusheng  and
      Cong, Xin  and
      Xu, Juyuan  and
      Li, Dahai  and
      Liu, Zhiyuan  and
      Sun, Maosong",
    editor = "Ku, Lun-Wei  and
      Martins, Andre  and
      Srikumar, Vivek",
    booktitle = "Proceedings of the 62nd Annual Meeting of the Association for Computational Linguistics (Volume 1: Long Papers)",
    month = aug,
    year = "2024",
    address = "Bangkok, Thailand",
    publisher = "Association for Computational Linguistics",
    url = "https://aclanthology.org/2024.acl-long.810/",
    doi = "10.18653/v1/2024.acl-long.810",
    pages = "15174--15186"
}

@article{wei2022chain,
  title={Chain-of-thought prompting elicits reasoning in large language models},
  author={Wei, Jason and Wang, Xuezhi and Schuurmans, Dale and Bosma, Maarten and Xia, Fei and Chi, Ed and Le, Quoc V and Zhou, Denny and others},
  journal={Advances in neural information processing systems},
  volume={35},
  pages={24824--24837},
  year={2022}
}

@inproceedings{reimers2019sentencebert,
  title = "Sentence-BERT: Sentence Embeddings using Siamese BERT-Networks",
  author = "Reimers, Nils and Gurevych, Iryna",
  booktitle = "Proceedings of the 2019 Conference on Empirical Methods in Natural Language Processing",
  month = "11",
  year = "2019",
  publisher = "Association for Computational Linguistics",
  url = "https://arxiv.org/abs/1908.10084",
}

@InProceedings{tastan2025aequa,
  title = 	 {{Aequa: Fair Model Rewards in Collaborative Learning via Slimmable Networks}},
  author =       {Tastan, Nurbek and Horv\'{a}th, Samuel and Nandakumar, Karthik},
  booktitle = 	 {Proceedings of the 42nd International Conference on Machine Learning},
  pages = 	 {59210--59236},
  year = 	 {2025},
  editor = 	 {Singh, Aarti and Fazel, Maryam and Hsu, Daniel and Lacoste-Julien, Simon and Berkenkamp, Felix and Maharaj, Tegan and Wagstaff, Kiri and Zhu, Jerry},
  volume = 	 {267},
  series = 	 {Proceedings of Machine Learning Research},
  month = 	 {13--19 Jul},
  publisher =    {PMLR},
  url = 	 {https://proceedings.mlr.press/v267/tastan25a.html}
}

@incollection{shapley1953value, 
  title = {A Value for n-Person Games},
  author = {Shapley, Lloyd S},
  booktitle = {Contributions to the Theory of Games II},
  editor = {Kuhn, Harold W. and Tucker, Albert W.},
  pages = {307--317},
  year = {1953},
  publisher = {Princeton University Press},
  address = {Princeton}
}

@inproceedings{du2023improving,
  title={Improving factuality and reasoning in language models through multiagent debate},
  author={Du, Yilun and Li, Shuang and Torralba, Antonio and Tenenbaum, Joshua B and Mordatch, Igor},
  booktitle={Forty-first International Conference on Machine Learning},
  year={2023}
}

@inproceedings{subramaniam2025multiagent,
    title={Multiagent Finetuning: Self Improvement with Diverse Reasoning Chains},
    author={Vighnesh Subramaniam and Yilun Du and Joshua B. Tenenbaum and Antonio Torralba and Shuang Li and Igor Mordatch},
    booktitle={The Thirteenth International Conference on Learning Representations},
    year={2025},
    url={https://openreview.net/forum?id=JtGPIZpOrz}
}

@inproceedings{liang2024MAD,
    title = "Encouraging Divergent Thinking in Large Language Models through Multi-Agent Debate",
    author = "Liang, Tian  and
      He, Zhiwei  and
      Jiao, Wenxiang  and
      Wang, Xing  and
      Wang, Yan  and
      Wang, Rui  and
      Yang, Yujiu  and
      Shi, Shuming  and
      Tu, Zhaopeng",
    editor = "Al-Onaizan, Yaser  and
      Bansal, Mohit  and
      Chen, Yun-Nung",
    booktitle = "Proceedings of the 2024 Conference on Empirical Methods in Natural Language Processing",
    month = nov,
    year = "2024",
    address = "Miami, Florida, USA",
    publisher = "Association for Computational Linguistics",
    url = "https://aclanthology.org/2024.emnlp-main.992/",
    doi = "10.18653/v1/2024.emnlp-main.992",
    pages = "17889--17904"
}

@inproceedings{chen2024agentverse,
    title={AgentVerse: Facilitating Multi-Agent Collaboration and Exploring Emergent Behaviors},
    author={Weize Chen and Yusheng Su and Jingwei Zuo and Cheng Yang and Chenfei Yuan and Chi-Min Chan and Heyang Yu and Yaxi Lu and Yi-Hsin Hung and Chen Qian and Yujia Qin and Xin Cong and Ruobing Xie and Zhiyuan Liu and Maosong Sun and Jie Zhou},
    booktitle={The Twelfth International Conference on Learning Representations},
    year={2024},
    url={https://openreview.net/forum?id=EHg5GDnyq1}
}

@inproceedings{liu2024dylan,
    title={A Dynamic {LLM}-Powered Agent Network for Task-Oriented Agent Collaboration},
    author={Zijun Liu and Yanzhe Zhang and Peng Li and Yang Liu and Diyi Yang},
    booktitle={First Conference on Language Modeling},
    year={2024},
    url={https://openreview.net/forum?id=XII0Wp1XA9}
}

@inproceedings{hu2025automated,
    title={Automated Design of Agentic Systems},
    author={Shengran Hu and Cong Lu and Jeff Clune},
    booktitle={The Thirteenth International Conference on Learning Representations},
    year={2025},
    url={https://openreview.net/forum?id=t9U3LW7JVX}
}

@inproceedings{zhang2025aflow,
    title={{AF}low: Automating Agentic Workflow Generation},
    author={Jiayi Zhang and Jinyu Xiang and Zhaoyang Yu and Fengwei Teng and Xiong-Hui Chen and Jiaqi Chen and Mingchen Zhuge and Xin Cheng and Sirui Hong and Jinlin Wang and Bingnan Zheng and Bang Liu and Yuyu Luo and Chenglin Wu},
    booktitle={The Thirteenth International Conference on Learning Representations},
    year={2025},
    url={https://openreview.net/forum?id=z5uVAKwmjf}
}

@inproceedings{jiang2023llmblender,
    title = "{LLM}-Blender: Ensembling Large Language Models with Pairwise Ranking and Generative Fusion",
    author = "Jiang, Dongfu  and
      Ren, Xiang  and
      Lin, Bill Yuchen",
    editor = "Rogers, Anna  and
      Boyd-Graber, Jordan  and
      Okazaki, Naoaki",
    booktitle = "Proceedings of the 61st Annual Meeting of the Association for Computational Linguistics (Volume 1: Long Papers)",
    month = jul,
    year = "2023",
    address = "Toronto, Canada",
    publisher = "Association for Computational Linguistics",
    url = "https://aclanthology.org/2023.acl-long.792/",
    doi = "10.18653/v1/2023.acl-long.792",
    pages = "14165--14178"
}

@inproceedings{hendrycks2021measuring,
title={Measuring Mathematical Problem Solving With the {MATH} Dataset},
author={Dan Hendrycks and Collin Burns and Saurav Kadavath and Akul Arora and Steven Basart and Eric Tang and Dawn Song and Jacob Steinhardt},
booktitle={Thirty-fifth Conference on Neural Information Processing Systems Datasets and Benchmarks Track (Round 2)},
year={2021},
url={https://openreview.net/forum?id=7Bywt2mQsCe}
}

@InProceedings{gao23PAL,
  title = 	 {{PAL}: Program-aided Language Models},
  author =       {Gao, Luyu and Madaan, Aman and Zhou, Shuyan and Alon, Uri and Liu, Pengfei and Yang, Yiming and Callan, Jamie and Neubig, Graham},
  booktitle = 	 {Proceedings of the 40th International Conference on Machine Learning},
  pages = 	 {10764--10799},
  year = 	 {2023},
  editor = 	 {Krause, Andreas and Brunskill, Emma and Cho, Kyunghyun and Engelhardt, Barbara and Sabato, Sivan and Scarlett, Jonathan},
  volume = 	 {202},
  series = 	 {Proceedings of Machine Learning Research},
  month = 	 {23--29 Jul},
  publisher =    {PMLR},
  url = 	 {https://proceedings.mlr.press/v202/gao23f.html}
}

@inproceedings{ling2017AQUARAT,
    title = "Program Induction by Rationale Generation: Learning to Solve and Explain Algebraic Word Problems",
    author = "Ling, Wang  and
      Yogatama, Dani  and
      Dyer, Chris  and
      Blunsom, Phil",
    editor = "Barzilay, Regina  and
      Kan, Min-Yen",
    booktitle = "Proceedings of the 55th Annual Meeting of the Association for Computational Linguistics (Volume 1: Long Papers)",
    month = jul,
    year = "2017",
    address = "Vancouver, Canada",
    publisher = "Association for Computational Linguistics",
    url = "https://aclanthology.org/P17-1015/",
    doi = "10.18653/v1/P17-1015",
    pages = "158--167"
}

@article{cobbe2021training,
  title={Training verifiers to solve math word problems},
  author={Cobbe, Karl and Kosaraju, Vineet and Bavarian, Mohammad and Chen, Mark and Jun, Heewoo and Kaiser, Lukasz and Plappert, Matthias and Tworek, Jerry and Hilton, Jacob and Nakano, Reiichiro and others},
  journal={arXiv preprint arXiv:2110.14168},
  year={2021}
}

@misc{chen2025xverifyefficientanswerverifier,
      title={xVerify: Efficient Answer Verifier for Reasoning Model Evaluations}, 
      author={Ding Chen and Qingchen Yu and Pengyuan Wang and Wentao Zhang and Bo Tang and Feiyu Xiong and Xinchi Li and Minchuan Yang and Zhiyu Li},
      year={2025},
      eprint={2504.10481},
      archivePrefix={arXiv},
      primaryClass={cs.CL},
      url={https://arxiv.org/abs/2504.10481}, 
}

@article{chen2021humaneval,
  author       = {Mark Chen and Jerry Tworek and Heewoo Jun and Qiming Yuan and Henrique Pond{\'{e}} de Oliveira Pinto and Jared Kaplan and Harri Edwards and Yuri Burda and Nicholas Joseph and Greg Brockman and Alex Ray and Raul Puri and Gretchen Krueger and Michael Petrov and Heidy Khlaaf and Girish Sastry and Pamela Mishkin and Brooke Chan and Scott Gray and Nick Ryder and Mikhail Pavlov and Alethea Power and Lukasz Kaiser and Mohammad Bavarian and Clemens Winter and Philippe Tillet and Felipe Petroski Such and Dave Cummings and Matthias Plappert and Fotios Chantzis and Elizabeth Barnes and Ariel Herbert{-}Voss and William Hebgen Guss and Alex Nichol and Alex Paino and Nikolas Tezak and Jie Tang and Igor Babuschkin and Suchir Balaji and Shantanu Jain and William Saunders and Christopher Hesse and Andrew N. Carr and Jan Leike and Joshua Achiam and Vedant Misra and Evan Morikawa and Alec Radford and Matthew Knight and Miles Brundage and Mira Murati and Katie Mayer and Peter Welinder and Bob McGrew and Dario Amodei and Sam McCandlish and Ilya Sutskever and Wojciech Zaremba}, 
  title        = {Evaluating Large Language Models Trained on Code},
  journal      = {CoRR},
  volume       = {abs/2107.03374},
  year         = {2021},
  url          = {https://arxiv.org/abs/2107.03374},
  eprinttype    = {arXiv},
  eprint       = {2107.03374},
  bibsource    = {dblp computer science bibliography, https://dblp.org}
}

@inproceedings{wang2023selfconsistency,
    title={Self-Consistency Improves Chain of Thought Reasoning in Language Models},
    author={Xuezhi Wang and Jason Wei and Dale Schuurmans and Quoc V Le and Ed H. Chi and Sharan Narang and Aakanksha Chowdhery and Denny Zhou},
    booktitle={The Eleventh International Conference on Learning Representations },
    year={2023},
    url={https://openreview.net/forum?id=1PL1NIMMrw}
}

@inproceedings{yang2025agentnet,
    title={AgentNet: Decentralized Evolutionary Coordination for {LLM}-based Multi-Agent Systems},
    author={Yingxuan Yang and Huacan Chai and Shuai Shao and Yuanyi Song and Siyuan Qi and Renting Rui and Weinan Zhang},
    booktitle={The Thirty-ninth Annual Conference on Neural Information Processing Systems},
    year={2025},
    url={https://openreview.net/forum?id=tXqLxHlb8Z}
}

@article{lu2024morphagent,
  title={Morphagent: Empowering agents through self-evolving profiles and decentralized collaboration},
  author={Lu, Siyuan and Shao, Jiaqi and Luo, Bing and Lin, Tao},
  journal={arXiv preprint arXiv:2410.15048},
  year={2024}
}

@inproceedings{vaswani2017attention,
 author = {Vaswani, Ashish and Shazeer, Noam and Parmar, Niki and Uszkoreit, Jakob and Jones, Llion and Gomez, Aidan N and Kaiser, \L ukasz and Polosukhin, Illia},
 booktitle = {Advances in Neural Information Processing Systems},
 editor = {I. Guyon and U. Von Luxburg and S. Bengio and H. Wallach and R. Fergus and S. Vishwanathan and R. Garnett},
 pages = {},
 publisher = {Curran Associates, Inc.},
 title = {Attention is All you Need},
 url = {https://proceedings.neurips.cc/paper_files/paper/2017/file/3f5ee243547dee91fbd053c1c4a845aa-Paper.pdf},
 volume = {30},
 year = {2017}
}

@misc{qwen3.5,
    title  = {{Qwen3.5}: Towards Native Multimodal Agents},
    author = {{Qwen Team}},
    month  = {February},
    year   = {2026},
    url    = {https://qwen.ai/blog?id=qwen3.5}
}

\clearpage 
\appendix

\section{Limitations}
\label{app:limitations}

\method is evaluated primarily on reasoning and programming benchmarks where answer correctness can be measured reliably. This focus allows controlled comparisons across task, agent count, model scale, and model generation, but leaves broader open-ended settings such as long-form generation, interactive tool use, and multi-turn planning as natural directions for future evaluation. 

The method also depends on response embeddings. If the embedding model fails to capture task-relevant differences between candidate answers, the contribution ordering and graph policy may miss useful communication paths. For the goodness of the selected embedding model, we refer the reader to~\citep{tastan2026stochastic}. 

Finally, \method deliberately uses one parallel draft round as the evidence-gathering stage for deciding whether communication is needed. This makes the sequential part selective and often sparse, but it also means that the initial agent pool size remains an important efficiency knob. Future extensions could combine \method with adaptive agent selection so that both the number of initial drafts and the communication graph are chosen instance by instance.

\section{Algorithm}

\begin{algorithm}[h]
    \caption{\method}
    \label{alg:tselforg}
    \begin{algorithmic}[1]
        \Require Query $\gQ$, agents $\{\gA_n\}_{n=1}^N$, encoder $f$, policy $p_\theta(\gE \mid \gX,\pi)$ 
        \For{$n=1$ to $N$}
            \State $\gR_n^{(0)}\gets\gA_n(\gQ)$, \quad $r_n\gets f(\gR_n^{(0)})$ 
        \EndFor
        \State $r_{\rm avg}\gets \frac{1}{N}\sum_n r_n$, \quad $\psi_n\gets\cos(r_n,r_{\rm avg})$
        \State $\pi\gets\argsort(\{\psi_n\}_{n=1}^N;\mathrm{desc})$ %, \quad $\gX\gets[r_1,\dots,r_N]^\top$
        \State $\gX \gets [r_1,\dots,r_N]^{\top}$, \quad $\gH \gets \operatorname{Enc}_{\theta}(\gX)$
        % \State $\gH \gets \operatorname{Enc}_{\theta}(\gX)$ 
        \State Compute masked forward logits $\ell_{m \to n}$ % for all feasible forward pairs 
        \State Sample/decode $\gE\sim p_\theta(\cdot\mid \gX,\pi)$ 
        \State \textbf{if} $\gE=\varnothing$ \textbf{then} return centroid response from $\{\gR_n^{(0)}\}_{n=1}^N$
        % \If{$\gE = \varnothing$}
        %     \State \Return centroid-selected response from $\{\gR_n^{(0)}\}_{n=1}^{N}$
        % \EndIf
        \For{each node $n$ in order $\pi$}
            \State $\Pa(n) \gets \{m:(m \to n)\in \gE\}$
            % \State $\gR_n^{(1)} \gets \gA_n\!\left(\gQ,\{\gR_m^{(\star)}:m\in\Pa(n)\}\right)$ \textbf{if} $\Pa(n)\neq\varnothing$, \textbf{else} $\gR_n^{(0)}$
            
            \State $\gR_n^{(1)}\gets 
            \begin{cases}
                \gA_n\!\left(\gQ,\{\gR_m^{(\star)}:m\in\Pa(n)\}\right), & \Pa(n)\neq\varnothing,\\
                \gR_n^{(0)}, & \text{otherwise.}
            \end{cases}$
            
            % \If{$\Pa(n)\neq\varnothing$}
            %     \State $\gR_n^{(1)} \gets \gA_n\!\left(\gQ,\{\gR_m^{(\star)}:m\in\Pa(n)\}\right)$
            % \Else
            %     \State $\gR_n^{(1)} \gets \gR_n^{(0)}$
            % \EndIf
        \EndFor
        \State \Return centroid response from $\{\gR_n^{(1)}\}_{n=1}^N$ 
        % \State \Return centroid-selected response from $\{\gR_n^{(1)}\}_{n=1}^{N}$
    \end{algorithmic}
\end{algorithm}

\section{Policy Behavior with Smaller Agents} 

We observe the same qualitative behavior with Qwen2.5-1.5B agents. As shown in Figure~\ref{fig:policy-behavior-15b}, the rescue rate increases from $15.6\%$ at $N=5$ to $20.6\%$ at $N=20$, while the preservation rate remains above $92.8\%$ for all tested team sizes. Harm is higher than in the 7B setting (refer to Figure~\ref{fig:policy-behavior-7b}), ranging from $5.6\%$ to $7.2\%$, which is expected given the weaker base agents. Nevertheless, the policy still improves a meaningful fraction of initially wrong answers while preserving the vast majority of initially correct ones, indicating that the same communication behavior appears even with smaller models.

\begin{figure}[h] 
    \centering
    \includegraphics[width=0.85\linewidth]{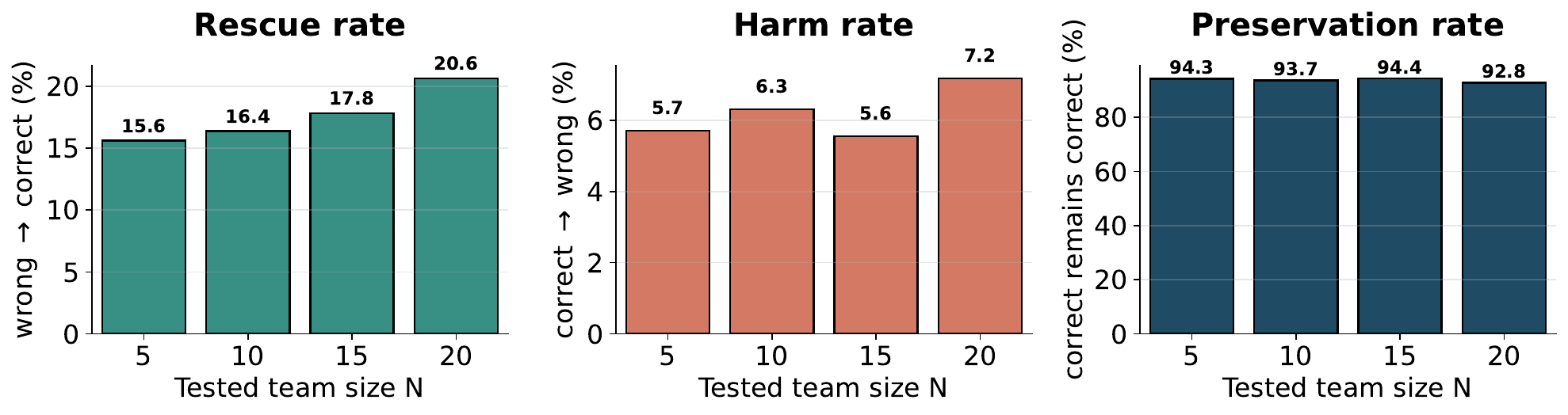}
    \caption{Policy behavior analysis for \method with Qwen2.5-1.5B agents on GSM8K. Rescue, harm, and preservation rates are computed by comparing each initial draft answer with the final answer after communication. \method rescues $15.6\%$-$20.6\%$ of initially wrong answers while preserving $92.8\%$-$94.4\%$ of initially correct answers across tested team sizes.} 
    \label{fig:policy-behavior-15b}
\end{figure}

\section{Communication Sparsity}
\label{app:sparse-communication}

Figure~\ref{fig:sparse-communication-appendix} reports the fraction of low-edge communication plans produced by \method on GSM8K. For Qwen2.5-1.5B, low-edge plans occur in $35.0\%$ of examples at $N=5$ and remain near $47$--$50\%$ for larger tested team sizes. For Qwen2.5-7B, the fraction increases from $43.4\%$ at $N=5$ to more than $70\%$ at $N=15$ and $N=20$. These results suggest that increasing the number of agents does not force dense communication; the learned policy often selects sparse interaction patterns. 

We also observe that, as we scale the capability of the backbone, it leads to more frequent sparse communication than less capable or weaker backbone, indicating that as individual agents become more capable, the policy can rely on fewer communication edges.

\begin{figure}[H] 
    \centering
    \includegraphics[width=0.58\linewidth]{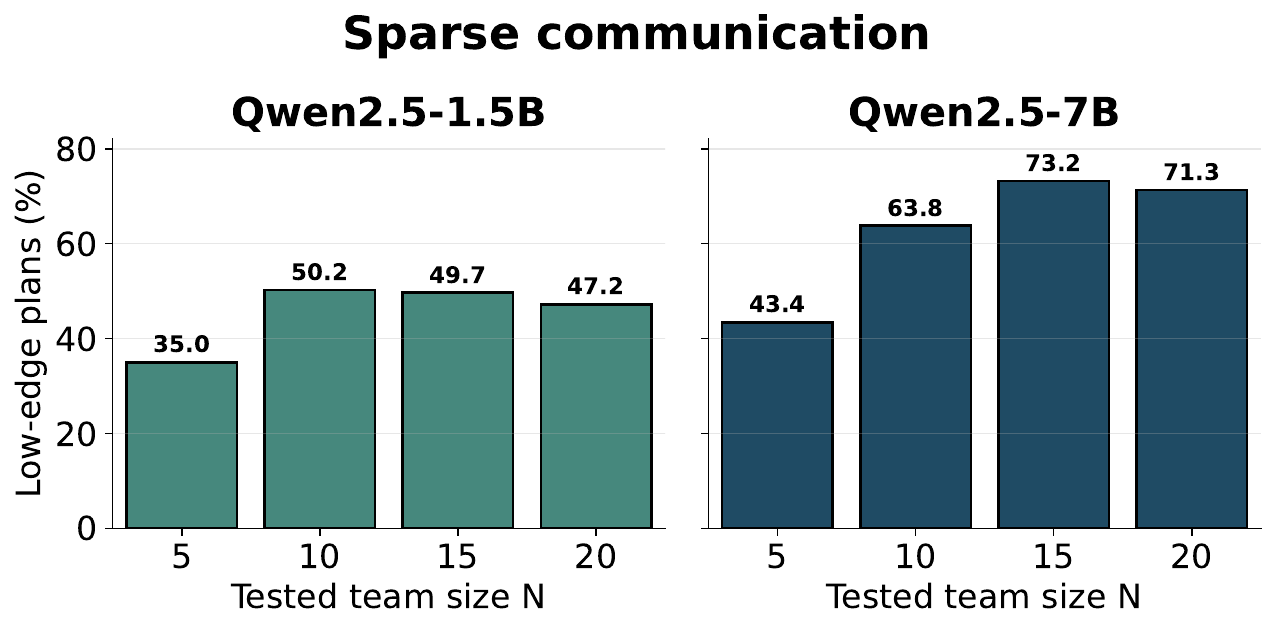}
    \caption{Communication sparsity for \method on GSM8K with Qwen2.5-1.5B and Qwen2.5-7B agents. We report the fraction of examples whose predicted communication graph uses at most half of the possible edges. Across both model sizes, \method frequently selects low-edge plans, indicating that the learned policy does not rely on dense all-to-all communication as team size increases.} 
    \label{fig:sparse-communication-appendix}
\end{figure}

\section{Experimental Settings}

\paragraph{Compute resources.} All experiments were run on NVIDIA A100 40GB GPUs. The same GPU class was used both to serve the LLM agents during multi-agent inference and to train the \method policy.

\paragraph{Backbone ablation setting.} For the policy-backbone ablation, we restrict training to 5 epochs for both the Transformer and GNN variants to keep the comparison controlled and computationally lightweight. Both variants use the same \method training objective, the same GSM8K training setting, Qwen2.5-1.5B agents, sampled graph plans at evaluation, temperature $0.5$, and three random repeats. The only changed component is the policy-network backbone used to score response-conditioned communication graphs. The GNN variant adapts the graph neural architecture from GDesigner from agent-role-specific graph design to \method's response-conditioned graph prediction setting. 

\paragraph{Policy-optimization ablation setting.} For the policy-optimization ablation, we compare the policy-gradient objective used in \method against a GRPO-style update. Both use Qwen2.5-1.5B agents, response-only inputs, contribution-based ordering, weighted aggregation, and XVerify-based evaluation at temperature $0.0$. Evaluation is conducted on AQUA-RAT at $N=5$ for both same-task training, AQUA-RAT$\rightarrow$AQUA-RAT, and cross-task transfer, GSM8K$\rightarrow$AQUA-RAT. 

Both variants use `transformer' backbone, $50$ policy-training iterations, batch size $32$, learning rate $0.1$, a hidden dimension $128$, $1$ transformer layers, $1$ attention heads, dropout $0.3$, Adam optimizer, gradient clipping $1.0$, and batch-mean baseline. For GRPO, we use $4$ rollouts.

\section{Additional Experiments}

\subsection{Extended Task Transfer}

We additionally provide studies on whether a policy trained on one source task can transfer to additional target tasks beyond the main transfer experiments. In this setting, \method is trained on GSM8K and evaluated without retraining on GSM-Hard~\citep{gao23PAL}, HumanEval~\citep{chen2021humaneval}, and MMLU~\citep{hendrycks2021measuring}. We compare against single-call and chain-of-thought baselines using the same target-task evaluation protocol. As shown in Table~\ref{tab:extended-task-transfer}, the GSM8K-trained \method policy improves over both baselines on GSM-Hard and HumanEval and remains comparable to the baselines on MMLU. These results suggest that the learned communication policy can transfer beyond the training task, especially when the target task benefits from structured multi-agent reasoning. 

\begin{table}[h] 
    \centering
    \caption{Extended task-transfer results. \method is trained on GSM8K and evaluated without retraining on GSM-Hard, HumanEval, and MMLU. Accuracy is reported as mean $\pm$ standard deviation.} 
    \label{tab:extended-task-transfer}
    \small
    \begin{tabular}{rr|cc|c}
        \toprule
        \rowcolor{lightgray} \textbf{Trained on} & \textbf{Tested on} & \textbf{Single} & \textbf{CoT} & \textbf{\method} \\ 
                \midrule
            \multirow{3}{*}{GSM8K} & GSM-Hard 
            & $34.40{\pm}0.72$ 
            & $36.07{\pm}0.31$ 
            & $\mathbf{37.13{\pm}1.21}$ \\
              & HumanEval 
            & $50.41{\pm}3.73$ 
            & $45.93{\pm}1.27$ 
            & $\mathbf{51.42{\pm}0.70}$ \\
              & MMLU      
            & $\mathbf{52.07{\pm}1.03}$ 
            & $51.27{\pm}1.50$ 
            & $52.00{\pm}1.06$ \\
        \midrule
        \multicolumn{2}{r}{\textbf{Avg. Acc.}}
            & $45.63$ 
            & $44.42$ 
            & $\mathbf{46.85}$ \\
        \multicolumn{2}{r}{\textbf{Avg. Rating}}
            & $2.00$ 
            & $2.67$ 
            & $\mathbf{1.33}$ \\
        \bottomrule

    \end{tabular}
\end{table}

\end{document}